\documentclass[11pt,letterpaper]{article}

\usepackage[T1]{fontenc}
\usepackage[utf8]{inputenc}
\usepackage{mathpazo}
\usepackage{microtype}
\usepackage[left=1.35in,right=1.35in,top=1.2in,bottom=1.2in]{geometry}
\usepackage{amsmath,amssymb,amsthm}
\usepackage{graphicx}
\usepackage{booktabs}
\usepackage{tabularx}
\usepackage{threeparttable}
\usepackage{caption}
\usepackage{enumitem}
\usepackage{xcolor}
\usepackage{titlesec}
\usepackage[hidelinks,breaklinks]{hyperref}
\usepackage{url}
\definecolor{navy}{HTML}{2F4B7C}
\definecolor{rulegrey}{HTML}{B9BEC6}

\titleformat{\section}{\normalfont\large\bfseries\color{navy}}{\thesection}{0.7em}{}
\titleformat{\subsection}{\normalfont\normalsize\bfseries}{\thesubsection}{0.7em}{}
\titlespacing*{\section}{0pt}{1.6\baselineskip}{0.5\baselineskip}
\titlespacing*{\subsection}{0pt}{1.1\baselineskip}{0.35\baselineskip}

\theoremstyle{plain}
\newtheorem{theorem}{Theorem}

\newtheorem{corollary}{Corollary}
\theoremstyle{definition}
\newtheorem{definition}{Definition}

\newcommand{\Nmax}{N_{\max}}
\newcommand{\pmin}{p_{\min}}
\newcommand{\neff}{n_{\mathrm{eff}}}
\newcommand{\Lnull}{\Lambda_0}
\newcommand{\Lpos}{\Lambda_1}

\begin{document}

\begin{center}
{\LARGE\bfseries What AI Red-Team Evaluations Can and Cannot Prove}\\[6pt]
{\large Bandana Kaur}\\[3pt]
{\normalsize APIsec Research Labs}\\[14pt]
\end{center}

\begin{center}
\begin{minipage}{0.90\textwidth}
\small
\noindent\textbf{Abstract.}
Red-team evaluations of AI models support some claims and not others, and the
boundary between the two is calculable rather than merely a matter of judgment. We
define the \emph{evidential ceiling} of an evaluation as the largest factor by
which one result can move belief under a fixed testing budget, derive it in
closed form for the benchmark null result, and use it to locate that boundary
exactly. We find that above a calculable harm rate, a benchmark of
modest size certifies a category to a stated evidentiary standard, and a clean
sheet is then the \emph{stronger} of the two possible observations, outweighing
a single reproduced failure. Below that rate, no passive benchmark of feasible size provides the specified evidence of safety under the fixed scoring rule and approximately independent trial structure. The crossing between the two regimes has a closed form and
falls as $1/n$. The bound is not specific to benchmarks: written in terms of a procedure's
hypothesis-conditioned elicitation rates, it covers adaptive and automated red
teaming as well, and shows that discrimination between the hypotheses rather
than attack success is what determines evidential worth. Auditing eight
evaluation suites against the boundary, we find that current benchmarks are
adequate for high-frequency harm categories and several orders of magnitude
short for rare, catastrophic ones. Safety benchmarks are not
uninformative. They are informative about a specific and computable set of
propositions, and the discipline they need is to state which.
\end{minipage}
\end{center}
\vspace{6pt}

% ==========================================================================
\section{Introduction}

Every major AI laboratory now publishes safety evaluations before deploying a
new model \cite{bommasani2021foundation}. A curated set of adversarial prompts
is assembled, model responses are scored for harmfulness, outcomes are compared
against prior versions or established thresholds, and the results are
communicated in safety reports, system cards \cite{mitchell2019modelcards}, and
press releases as evidence that the model is safe enough to deploy. Regulatory
bodies have begun treating these evaluations as the primary empirical basis for
oversight determinations. The practice has become a field, with community
conventions, public leaderboards, and an expanding catalogue of standardized
test suites \cite{liang2023helm, dehghani2021lottery}.

A reaction has set in. Recent work has questioned whether red-teaming's
quantitative outputs support comparison across systems at all
\cite{chouldechova2025comparison}, whether benchmark performance generalizes
across semantically equivalent prompts \cite{broomfield2025structural}, and
whether evaluations carry the statistical precision their conclusions assume
\cite{miller2024errorbars}. The critiques are absolutely correct. But the conclusion
sometimes drawn from them, that safety benchmarks are close to worthless, does
not follow, and this paper argues against it.

Our position is that safety evaluations are informative, and that the useful
question is not whether they work but which propositions they establish. Given a harm rate, a
sample size, and a stated evidentiary standard, one can compute whether an
evaluation licenses a certification claim, and if not, how large it would need
to be. For high-frequency harm categories, existing public benchmarks could already
clear that bar. For rare categories, they fall short by three orders of
magnitude, and past a calculable rate no benchmark of feasible size clears it at
all. Both halves of that finding are consequences of one closed-form expression,
and the practical value of the expression is that it tells a laboratory what it
may claim before it runs the evaluation rather than after.

This has an unexpected corollary; the
evidence carried by a clean benchmark and by a single reproduced failure cross
at a computable harm rate. Below it, one observed failure outweighs a clean
sheet, which is the finding the critical literature would expect. Above it, the
ordering reverses and the clean sheet is the stronger evidence. At a one percent
harm rate with 520 prompts, a clean result carries 1.4 times the evidence of a
single observed harm. A framework capable of reaching that conclusion is not
one tuned to disparage benchmarks.

Medicine reached the same place by a harder route. Early drug approvals treated
the absence of observed adverse events in small trials as evidence of safety;
the failures that followed, thalidomide among them \cite{kim2011thalidomide},
produced the modern requirement for powered, pre-registered trials. What
medicine did not conclude was that trials are uninformative. It concluded that a
trial licenses a claim proportional to its power, and it built the machinery to
compute the proportion. Surrogate-endpoint theory added the second condition
\cite{prentice1989surrogate}: a proxy supports claims about a hard endpoint only
if it predicts that endpoint in the target population. We believe AI safety evaluation is
at an earlier methodological stage, and formalizing now, before failure at
scale forces it, is cheaper and more tractable.
% ==========================================================================
\section{Related work}

Chouldechova et al. \cite{chouldechova2025comparison} use measurement theory to
ask when red-teaming's quantitative outputs, mainly attack success rate, support
meaningful comparison across systems, and conclude that the conditions for valid
comparison are rarely met. Broomfield et al. \cite{broomfield2025structural}
name the failure of safety to generalize across semantically equivalent prompt
structures and use it to motivate new attacks and a defense. Both diagnoses are
qualitative. Neither quantifies the resulting evidential gap, nor identifies
where the gap closes with more data and where it does not.

A separate strand introduces standard statistical apparatus into language model
evaluation generally \cite{miller2024errorbars}, warning that small evaluations
yield wide intervals and unreliable comparisons. This establishes our
Condition~1 for evaluations at large. We specialize to the rare-event regime,
derive closed forms in both directions, and audit how far current practice sits
from each.

The safety-case program asks a different question \cite{clymer2024safetycases,
korbak2025controlsafetycase}: given several heterogeneous evidence sources, does
their combination license a deployment conclusion? That is an aggregation
question. We work one level below it, asking how much weight a single source
carries, and supply the per-source quantity such an aggregation needs.

% ==========================================================================
\section{What an evaluation must establish}

Safety evaluation is an inference from observed test performance to behavior
under deployment conditions not included in the test. Any such inference
requires three things.

\begin{enumerate}[leftmargin=1.6em,itemsep=3pt,topsep=4pt]
\item \textbf{Statistical sufficiency:} The test has power to detect
safety-relevant differences: the sample is large enough to distinguish the
signal from sampling noise.

\item \textbf{Distributional validity:} The distribution of test inputs
represents the deployment distribution, so that what is measured predicts what
happens in deployment rather than only in the test.

\item \textbf{Structural generalization:} Robustness on the test set
generalizes beyond the specific stimuli it contains, so that passing is evidence
of a latent safety property rather than of having learned to pass.
\end{enumerate}

These are the minimum requirements for any empirical claim generalizing from
sample to population \cite{shadish2002experimental}. They are why randomized
trials require power calculations \cite{moher1994power} and why regulatory
science requires representative sampling frames. Sections~\ref{sec:ceiling}
and~\ref{sec:practice} make Condition~1 exact and measure the distance between
it and current practice; Section~\ref{sec:practice} treats Conditions~2 and~3
empirically; and Section~\ref{sec:claims} states what each condition, once met,
entitles an evaluation to claim.

% ==========================================================================
\section{The evidential ceiling}\label{sec:ceiling}

For any evidence-generating procedure restricted to a feasibility budget, we
define its evidential ceiling as the maximum factor by which one result can move
posterior odds between a hypothesis of elevated risk and one of acceptable risk.
The construct is general and could in principle be computed for any evidence
type a safety case cites. We instantiate it fully for the benchmark null result
and leave the others open.

The derivation is deliberately elementary. Its content is a special case of the
zero-numerator bound formalized by Hanley and Lippman-Hand in 1983
\cite{hanley1983nothing}. Our contribution
is the consequence for evaluation design, that the boundary between what a
benchmark can and cannot certify is a computable quantity rather than a matter
of expert judgment, and that it lies in a different place than current practice
assumes in both directions.

\begin{definition}[Evidential ceiling]\label{def:ceiling}
An evidence-generating procedure is a tuple $E = (\mathcal{H}, B, C, \Lambda)$,
where $\mathcal{H} = \{H_0, H_1\}$ are competing hypotheses about the latent
safety state, with $H_0$ acceptable risk and $H_1$ elevated risk; $B$ is a
resource budget, for a benchmark a feasibility ceiling $\Nmax$ on a count of
approximately independent trials; $C$ is the observation channel carrying the
safety state to an observable result $r$; and
$\Lambda(r) = P(r \mid H_1) / P(r \mid H_0)$. Because evidence has a direction,
the ceiling is defined separately for each. The \emph{exculpatory} ceiling,
the most a result can argue toward acceptable risk, is
\[
  \mathrm{Ceil}^{-}(E) \;=\; \sup_{\,r \,:\, \Lambda(r) \le 1} \; \bigl| \log \Lambda(r) \bigr| ,
\]
and the \emph{incriminating} ceiling is the corresponding supremum over $r$ with
$\Lambda(r) \ge 1$. Both range over results feasible under $B$, and both measure
the largest amount of information a single result can contribute about the
safety state. We report such quantities in bits, taking $\log = \log_2$
throughout, except inside proofs where natural logarithms are used.
\end{definition}

So for a benchmark of $n$ trials the unrestricted supremum
of $|\log \Lambda|$ is attained at $k = n$, every prompt harmful, an outcome
that argues for elevated risk and says nothing about the worth of a clean sheet.
It is $\mathrm{Ceil}^{-}$ that governs certification, attained at $k = 0$ since
$\Lnull$ is monotone in $k$. Both ceilings are properties of the channel $C$ and
the budget $B$, not of any scoring rule layered on the raw result: by the data
processing inequality no downstream relabeling, aggregation, or reformatting can
raise either (Figure~\ref{fig:ceiling}a).

\begin{figure}[!htbp]
\centering
\includegraphics[width=\textwidth]{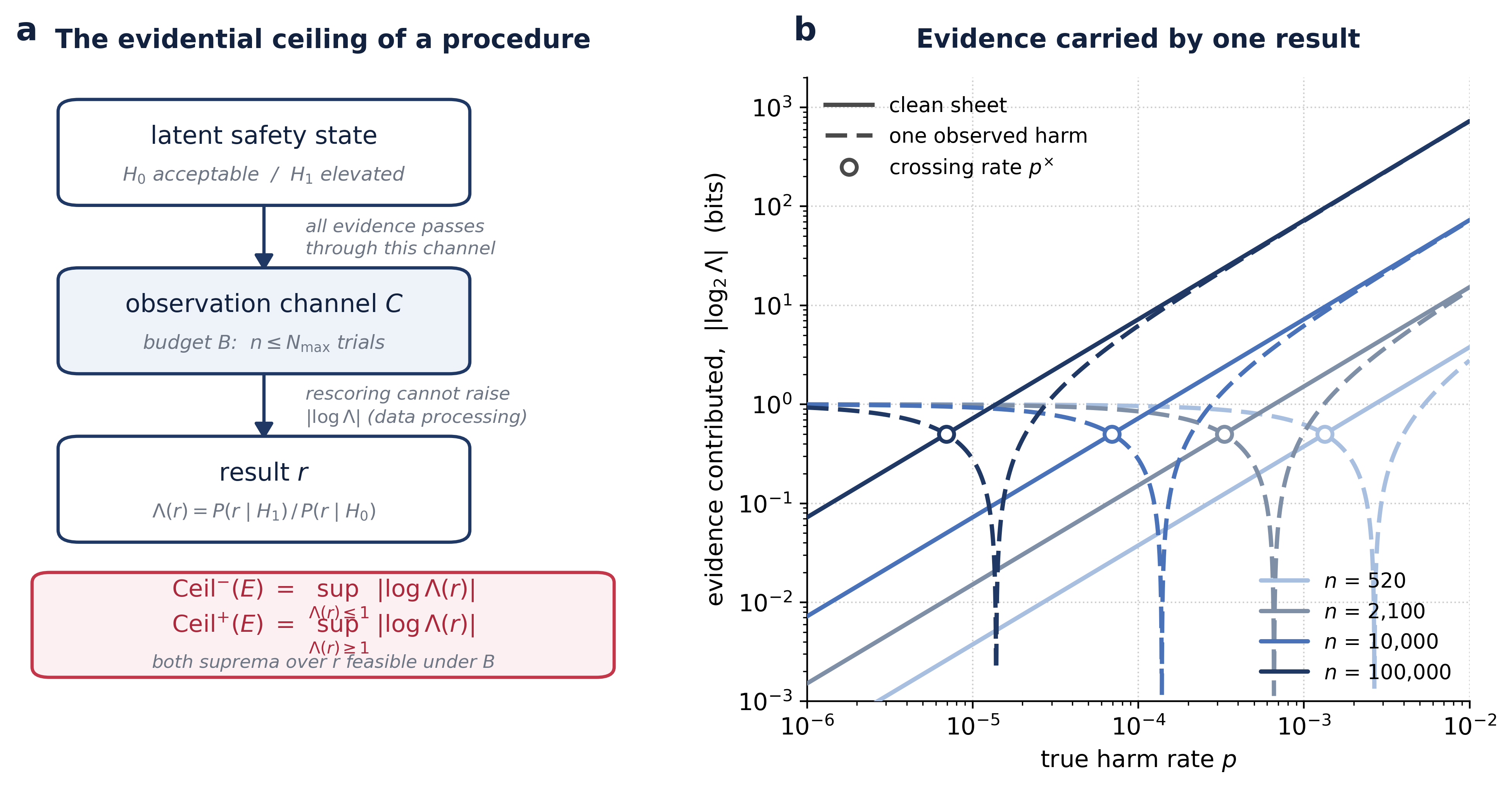}
\caption{\textbf{The evidential ceiling and the two evidence regimes.}
(\textbf{a}) The evidence channel. Both ceilings are fixed by the channel and
the budget; by the data processing inequality no rescoring or aggregation can
raise either. The exculpatory ceiling $\mathrm{Ceil}^{-}$ governs
certification, and for a passive benchmark it is attained at $k = 0$.
(\textbf{b}) Evidence contributed by a single result, in bits, both quantities
scored against the same hypothesis pair ($H_0$: $p = rp_u$ versus $H_1$:
$p = p_u$) at $r = 0.5$. Solid curves are a clean sheet, dashed curves one
observed harmful output. Open circles mark the crossing rate of
Corollary~\ref{cor:crossing}. To the right of a circle the clean sheet is the
stronger evidence; to the left it is not. Each dashed curve breaks at
$p = 2p^{\times}$, where $\Lambda_1 = 1$ and one harm carries no evidence at
all; the magnitude axis does not display the change of direction there. As $p$
falls the clean sheet carries vanishing evidence while the single harm
converges to $\log_2(1/r) = 1$ bit, independent of $n$.}
\label{fig:ceiling}
\end{figure}

\subsection{The benchmark null result, made precise}

Let two hypotheses bound the harm rate of interest: $H_1$ (elevated risk,
$p = p_u$) and $H_0$ (acceptable risk, $p = p_s = r p_u$), with $r \in (0,1)$
the improvement ratio, so that $r = 0.5$ is the fifty percent reduction safety
reports most commonly claim. A belief is prior odds $O_0 = P(H_1)/P(H_0)$, and
an evaluation of $n$ prompts updates it by $O_1 = O_0 \Lambda$.

For a zero-harm outcome, $\Lnull = [(1-p_u)/(1-p_s)]^{n}$. At $n = 520$ and
$(p_u, p_s) = (0.001, 0.0005)$ this is $0.77$, a likelihood ratio barely
distinguishable from the $1$ that constitutes no evidence at all. Expanding to
first order, $\ln \Lnull = -n p_u (1-r) + \mathcal{O}(n p_u^2)$, so
$\Lnull \to 1$ as $p_u \to 0$ at fixed $n$. Null results become less
informative, not more, as harm becomes rarer. Requiring $\Lnull \le \tau$ gives
the sample-size criterion
\begin{equation}\label{eq:nreq}
  n \;\ge\; \frac{\ln \tau}{\ln[(1-p_u)/(1-p_s)]} \;\approx\; \frac{-\ln \tau}{p_u (1-r)},
\end{equation}
which is the expression the rest of this section reads in both directions.

\subsection{Above the boundary: what a benchmark does establish}

We now state the constructive direction, it is the one current
discussion is observed to neglect.

\begin{theorem}[Adequacy of feasible benchmarks above the boundary]\label{thm:adequacy}
Fix an evidentiary threshold $\tau$, an improvement ratio $r$, and a feasibility
ceiling $\Nmax$. For any harm category whose rate under the scoring rule in use
satisfies
\[
  p \;>\; \pmin(\tau, \Nmax, r) \;\approx\; \frac{-\ln \tau}{\Nmax(1-r)},
\]
there exists a finite $n \le \Nmax$, given by \eqref{eq:nreq}, such that a
zero-harm result on $n$ approximately independent trials moves posterior odds
toward acceptable risk by at least the factor $\tau$. The required $n$ scales as
$\mathcal{O}(1/p)$ and is computable in closed form before the evaluation is
run.
\end{theorem}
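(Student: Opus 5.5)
The plan is to read \eqref{eq:nreq} as a condition on $p$ at fixed $n=\Nmax$ rather than on $n$ at fixed $p$. Identify the category's rate under the scoring rule with the upper hypothesis, $p_u = p$, $p_s = rp$, and set
\[
  n^\ast(p) \;=\; \left\lceil \frac{\ln \tau}{\ln[(1-p)/(1-rp)]} \right\rceil .
\]
Because $0<rp<p<1$, the ratio $(1-p)/(1-rp)$ lies in $(0,1)$, so the denominator is strictly negative; with $\tau<1$ the numerator is negative too, so $n^\ast(p)$ is a finite positive integer. Since $\Lnull(n) = [(1-p)/(1-rp)]^n$ is strictly decreasing in $n$, every $n \ge n^\ast(p)$ gives $\Lnull \le \tau$. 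That is exactly the statement that a clean sheet multiplies the posterior odds $O_1 = O_0\Lnull$ by at most $\tau$, i.e.\ moves them toward $H_0$ by at least the factor $1/\tau$. Existence and closed-form computability before the run then follow immediately.

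The content of the theorem is the condition $n^\ast(p)\le \Nmax$. Define $\pmin$ exactly as the unique root in $(0,1)$ of
\[
  g(p) \;:=\; -\ln\frac{1-p}{1-rp} \;=\; \frac{-\ln\tau}{\Nmax}.
\]
To justify this definition I would show that $g$ is continuous and strictly increasing on $(0,1)$, with $g(0^+)=0$ and $g(1^-)=\infty$. Differentiating gives
\[
  g'(p) \;=\; \frac{1}{1-p} - \frac{r}{1-rp} \;=\; \frac{1-r}{(1-p)(1-rp)} \;>\; 0 .
\]
Monotonicity then gives the equivalence $p>\pmin \iff g(p) > -\ln\tau/\Nmax \iff n^\ast(p) \le \Nmax$. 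Here I will have to handle the ceiling carefully: $g(p)\Nmax \ge -\ln\tau$ means the real-valued bound is at most $\Nmax$, and since $\Nmax$ is an integer, the ceiling is too.

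For the approximation, expand $g(p) = (1-r)p + \tfrac12(1-r^2)p^2 + \mathcal{O}(p^3)$. This gives $\pmin = \frac{-\ln\tau}{\Nmax(1-r)}\,(1+\mathcal{O}(\pmin))$, and $n^\ast(p) = \frac{-\ln\tau}{(1-r)p}(1+\mathcal{O}(p))$, which is the claimed $\mathcal{O}(1/p)$ scaling.

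For a non-asymptotic bound, I would use the sandwich $(1-r)p \le g(p) \le \frac{(1-r)p}{1-p}$. It comes from integrating $g'$, using $(1-p)(1-rp)\le 1$ for the lower bound and $\ge (1-p)$ for the upper bound. This also gives $n^\ast(p)\le \lceil -\ln\tau/((1-r)p)\rceil$, so the stated $\approx$ is conservative: the exact $\pmin$ lies below the leading-order expression.

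I expect the main obstacle to be interpretive rather than computational, and I would settle it before starting. The theorem speaks of ``the rate $p$'' while the likelihood ratio needs two hypotheses. I will fix the convention $p_u=p$, $p_s=rp$, consistent with Figure~\ref{fig:ceiling}b. I will also state the threshold direction explicitly: $\tau<1$, with ``by at least the factor $\tau$'' read as $\Lnull\le\tau$. Finally, ``approximately independent'' will be modelled as exact i.i.d.\ Bernoulli trials, with the remark that positive dependence reduces the effective $n$. Getting these conventions straight is what makes the one-line monotonicity argument airtight.
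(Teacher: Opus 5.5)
Your proposal is correct and follows the paper's own argument. $\Lnull$ is decreasing in $n$, so every $n$ at or above \eqref{eq:nreq} gives $\Lnull\le\tau$, and that value is at most $\Nmax$ exactly when $p$ exceeds the root $\pmin$ of $\Lnull(\Nmax,p)=\tau$. You make explicit what the paper leaves implicit: the monotonicity in $p$ via $g'(p)=(1-r)/[(1-p)(1-rp)]>0$, the handling of the ceiling, and a non-asymptotic form of the $\mathcal{O}(1/p)$ claim.

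One aside is misjustified, though harmlessly. The bound $(1-s)(1-rs)\ge 1-p$ fails near $s=p$, since at $s=p$ the left side is $(1-p)(1-rp)<1-p$. The upper sandwich itself still holds, because $g(p)=\ln\bigl(1+\tfrac{(1-r)p}{1-p}\bigr)\le\tfrac{(1-r)p}{1-p}$. Only your correctly derived lower bound $g(p)\ge(1-r)p$ is actually used in the argument.
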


\begin{proof}
$\Lnull(n,p) = [(1-p)/(1-rp)]^{n}$ is decreasing in $n$ for $r \in (0,1)$, so
$\Lnull(n,p) \le \tau$ holds for all $n$ at or above the value in
\eqref{eq:nreq}. That value is at most $\Nmax$ exactly when $p > \pmin$.
\end{proof}

The arithmetic theorem is encouraging at the rates where
much safety-relevant behaviour actually sits. At $p = 10^{-2}$, a
factor-of-two update needs 138 prompts, an order-of-magnitude update needs 458,
and holding false certification below five percent needs 299. AdvBench, at
$n = 520$, clears all three. 

\subsection{Below the boundary: what no benchmark establishes}

\begin{theorem}[Impossibility of informative null-result certification]\label{thm:impossibility}
Fix $\tau$, $r$, and $\Nmax$ as above. Fix any scoring rule, and let $p$ denote
the harm rate the category exhibits under that rule. If $p < \pmin(\tau, \Nmax,
r)$, then no evaluation satisfying $n \le \Nmax$, whose trials are approximately
independent Bernoulli draws at rate $p$, can produce a zero-harm result
constituting informative evidence of safety at threshold $\tau$, whatever its
prompt diversity.
\end{theorem}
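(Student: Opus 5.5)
The plan is to run the proof of Theorem~\ref{thm:adequacy} in reverse. Whatever the prompt diversity, the only information a zero-harm result carries about $H_0$ versus $H_1$ is its likelihood under each hypothesis. Under the stated trial model, that is fixed by $n$ and the rate $p$ the category exhibits under the chosen scoring rule. Concretely, with $p_u = p$ and $p_s = rp$, independence of the $n$ Bernoulli trials gives
\[
  \Lnull(n,p) \;=\; \Bigl[\frac{1-p}{1-rp}\Bigr]^{n},
\]
and no feature of the prompts beyond $n$ and $p$ enters this expression. So diversity is irrelevant by construction. If someone objects that a cleverer presentation of the outcome could help, the data processing remark following Definition~\ref{def:ceiling} disposes of it: rescoring or aggregating the zero-harm outcome cannot raise $\mathrm{Ceil}^{-}$. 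Changing the scoring rule changes $p$, but the hypothesis is already stated at whatever $p$ that rule induces.

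The core step is monotonicity. As in the proof of Theorem~\ref{thm:adequacy}, $\Lnull(n,p)$ is strictly decreasing in $n$ for $r\in(0,1)$, because the base $(1-p)/(1-rp)$ lies in $(0,1)$. Hence over all feasible $n \le \Nmax$ the smallest attainable value is $\Lnull(\Nmax,p)$. It then suffices to show that $p<\pmin$ forces $\Lnull(\Nmax,p) > \tau$. Here I would define $\pmin$ exactly, not through its approximation, as the unique root in $(0,1)$ of
\[
  \Nmax \ln\frac{1-p}{1-rp} \;=\; \ln\tau ,
\]
so that the statement ``the value in \eqref{eq:nreq} is at most $\Nmax$ exactly when $p>\pmin$'' used in Theorem~\ref{thm:adequacy} becomes literally true.

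Everything then reduces to showing that $g(p) = \ln\frac{1-p}{1-rp}$ is strictly decreasing on $(0,1)$ with $g(0)=0$. Differentiating gives
\[
  g'(p) \;=\; -\frac{1}{1-p} + \frac{r}{1-rp} \;=\; \frac{-(1-r)}{(1-p)(1-rp)} \;<\; 0 .
\]
So $g$ is a strictly decreasing bijection onto $(-\infty,0)$, and $\pmin$ exists and is unique. For $p<\pmin$ we get $\Nmax g(p) > \Nmax g(\pmin) = \ln\tau$, which is $\Lnull(\Nmax,p)>\tau$. Combined with monotonicity in $n$, this gives $\Lnull(n,p)>\tau$ for every $n\le\Nmax$. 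That is exactly the failure of the threshold, taking ``informative at threshold $\tau$'' to mean $\Lnull\le\tau$ as in \eqref{eq:nreq}. The first-order approximation $\pmin\approx -\ln\tau/(\Nmax(1-r))$ then follows from $g(p) = -(1-r)p + \mathcal{O}(p^2)$.

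I expect the main obstacle to be interpretive rather than computational. One issue is pinning $\pmin$ to its exact definition instead of the approximation, since near the boundary the approximate version of the theorem can fail by the $\mathcal{O}(p^2)$ correction. The other is making precise what ``approximately independent'' permits. I would state the theorem under exact independence and then add a remark. If the trials are only approximately independent, for example exchangeable with some effective sample size, I would first try replacing $n$ by an effective count $\neff \le n$. Positive dependence can only shrink the evidence, so that is the direction that keeps the impossibility intact.
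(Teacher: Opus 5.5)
Your proposal is correct and matches the paper's proof. Monotonicity of $\Lnull$ in $n$ caps the achievable reduction at $\Lnull(\Nmax,p)$, $\pmin$ is defined exactly by $\Lnull(\Nmax,\pmin)=\tau$, and monotonicity in $p$ then gives $\Lnull(n,p)\ge\Lnull(\Nmax,p)>\tau$ for every $n\le\Nmax$. Your derivative of $g$ supplies the justification the paper asserts without proof, namely that $\Lnull$ increases as $p$ falls and that $\pmin$ exists and is unique (implicitly for $\tau\in(0,1)$).
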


\begin{proof}
$\Lnull(n,p)$ is decreasing in $n$ and increasing as $p$ falls. The constraint
$n \le \Nmax$ therefore caps the achievable reduction at $\Lnull(\Nmax, p)$.
Setting $\Lnull(\Nmax,p) = \tau$ and solving for $p$ gives $\pmin$. For
$p < \pmin$, monotonicity gives $\Lnull(n,p) \ge \Lnull(\Nmax,p) > \tau$ for all
$n \le \Nmax$.
\end{proof}

The quantification over scoring rules deserves care. The result does not hold
``regardless of the scoring rule'', which would be false: the rule defines the
event counted and hence defines $p$, and a stricter judge that flags borderline
completions induces a larger $p$ and can lift a category out of the regime
entirely. What the theorem says is that for each fixed rule the bound applies at
whatever rate that rule induces. Read constructively, this makes detection
sensitivity the only lever that moves a category across the boundary, and it is
a lever on the observation channel rather than on the budget.

At $\Nmax = 10^{5}$, $r = 0.5$, $\tau = 0.5$, the boundary sits at
$\pmin \approx 1.4 \times 10^{-5}$ (Figure~\ref{fig:boundary}b). We do not
assert that any specific CBRN category falls below it; population base rates for
these categories are contested and largely unmeasured. The claim is
conditional, so if a category's true rate falls in this regime, which current
evidence does not rule out, then certifying its absence via null results is
infeasible in principle within any cost structure resembling current budgets,
and the field should stop asking for larger benchmarks in that category and
require non-benchmark evidence to carry the certification burden instead.

One boundary condition is worth stating, since it is where the result would
fail. Both theorems assume $\Nmax$ binds on approximately independent trials.
Adaptive or model-assisted elicitation could change the governing statistic. The
natural route to such a demonstration, however, tightens rather than loosens the
bound: if the evaluation is mediated by a single strategic policy the model
adopts once it infers it is under test, the chain from safety state to policy to
observations is Markov, and the data processing inequality caps the total
information any combination of sources carries by the capacity of that
bottleneck rather than by the sum of their individual ceilings. Models can
strategically underperform on evaluations they detect
\cite{vanderweij2025sandbagging}, so this is not hypothetical. We regard the
scaling of adaptive elicitation in the rare-harm regime as the most important
open question bearing on both theorems.

\subsection{Where the two regimes meet}

The two theorems share a boundary, and a second quantity locates a second
boundary of independent interest: the rate at which a clean sheet and a single
observed failure carry equal evidence.

Scored against the same hypothesis pair, one harmful output in $n$ trials gives
$\Lpos = (1/r)[(1-p_u)/(1-p_s)]^{n-1}$. At $n = 520$ and
$(p_u,p_s) = (0.001, 0.0005)$ this is $1.54$, against $\Lnull = 0.77$: the
single harm carries $0.625$ bits and the clean sheet $0.375$ bits, a ratio of
$1.7$. We state this plainly because an inflated version of the comparison is
easy to write. Setting $\Lnull$ beside a capability-model likelihood ratio of
$1/\varepsilon$ yields an apparent factor of eighteen, and that comparison is
invalid, since the two quantities answer different questions.

\begin{corollary}[Crossing rate]\label{cor:crossing}
For a benchmark of $n$ approximately independent trials at improvement ratio
$r$, the clean sheet and the single observed harm carry equal evidence at
\[
  p^{\times} \;\approx\; \frac{\ln(1/r)}{2n(1-r)} \;=\; \tfrac{1}{2}\,\pmin(r, n, r),
\]
with the clean sheet the more informative observation above $p^{\times}$ and the
single harm the more informative below it. At $n = 520$, $r = 0.5$ this gives
$p^{\times} = 1.33 \times 10^{-3}$; at $n = 10^{5}$ it gives
$6.93 \times 10^{-6}$.
\end{corollary}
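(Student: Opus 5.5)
The plan is to reduce everything to a single scalar function of $p$ and compare two affine expressions in it. Write $q(p) = (1-p)/(1-rp)$ with $p_u = p$, $p_s = rp$, and set $f(p) = -\ln q(p) > 0$. Then the clean sheet has $\ln \Lnull = -n f(p)$, so it carries $n f(p)$ nats. The single harm has $\ln \Lpos = \ln(1/r) - (n-1) f(p)$, so it carries $\bigl|\ln(1/r) - (n-1) f(p)\bigr|$ nats. Before comparing, I would establish that $f$ is strictly increasing on $(0,1)$. Differentiating gives $f'(p) = 1/(1-p) - r/(1-rp)$, which is positive for $r\in(0,1)$. Expanding to first order gives $f(p) = p(1-r) + \mathcal{O}(p^2)$, consistent with the expansion already used for \eqref{eq:nreq}.

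Next I would split into two cases according to the sign of $\ln \Lpos$.

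\textbf{Case $f(p) \le \ln(1/r)/(n-1)$.} Here $\Lpos \ge 1$. Equating magnitudes gives $n f = \ln(1/r) - (n-1) f$, i.e.
\[
  (2n-1)\, f(p^{\times}) = \ln(1/r).
\]
Set $D(p) = (2n-1) f(p) - \ln(1/r)$. Then $D$ is the clean-sheet evidence minus the single-harm evidence, and it is strictly increasing because $f$ is. So there is exactly one root $p^{\times}$ in this case, with $D<0$ (single harm stronger) below it and $D>0$ (clean sheet stronger) above it. Substituting $f \approx p(1-r)$ and $2n-1 \approx 2n$ yields $p^{\times} \approx \ln(1/r)/(2n(1-r))$. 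Comparing with the approximate form of $\pmin$ in Theorem~\ref{thm:adequacy}, with $\tau = r$ and $\Nmax = n$, gives $\pmin(r,n,r) \approx \ln(1/r)/(n(1-r))$, hence the factor $\tfrac12$.

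\textbf{Case $f(p) > \ln(1/r)/(n-1)$.} Here $\Lpos < 1$, so the single harm points the other way. Its magnitude is $(n-1)f - \ln(1/r)$, which is strictly less than $n f$. So the clean sheet remains the stronger observation throughout this region, and no second crossing occurs. This region begins at roughly $2p^{\times}$, matching the break in Figure~\ref{fig:ceiling}b.

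The numerical claims then follow by direct evaluation. With $r=0.5$, $\ln 2/(2\cdot 520 \cdot 0.5) = 1.33\times 10^{-3}$, and $\ln 2/(2\cdot 10^5\cdot 0.5) = 6.93\times 10^{-6}$.

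The main obstacle is justifying the "$\approx$". The exact $p^{\times}$ is defined only implicitly by $f(p^{\times}) = \ln(1/r)/(2n-1)$. I would handle this by bounding $f$ between $p(1-r)$ and $p(1-r)(1+\mathcal{O}(p))$. The upper side comes from the concavity/convexity of $\ln(1-p)$ against $\ln(1-rp)$. Since $p^{\times} = \mathcal{O}(1/n)$, both the $\mathcal{O}(p)$ correction and the replacement of $2n-1$ by $2n$ are relative errors of order $1/n$. This is negligible at the benchmark sizes quoted, so the closed form is accurate to a factor $1+\mathcal{O}(1/n)$.
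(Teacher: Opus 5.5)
Your proposal is correct. It follows the derivation the paper leaves implicit: equate $|\ln\Lnull| = n f(p)$ with $|\ln\Lpos| = \ln(1/r) - (n-1)f(p)$, linearize $f(p)\approx p(1-r)$, and read off $p^{\times}\approx \ln(1/r)/(2n(1-r)) = \tfrac12\pmin(r,n,r)$. The rest is sound and fills in steps the paper only asserts: monotonicity of $f$ giving a unique crossing, the separate case $p \gtrsim 2p^{\times}$ where $\Lpos<1$ (which the $p_u = 10^{-2}$ and $5\times 10^{-3}$ columns of Table~\ref{tab:limit} occupy), and the $1+\mathcal{O}(1/n)$ control on the approximation.
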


Table~\ref{tab:limit} traces both quantities across four orders of magnitude.
The pattern has two halves and both are informative. Above the crossing rate the
clean sheet dominates, by a factor of $1.4$ at $p = 10^{-2}$ and $2.1$ at
$p = 5 \times 10^{-3}$: when harm is common enough to be expected, its absence
is genuinely surprising and therefore genuinely evidential. Below the crossing
rate the ordering reverses and then diverges, because $|\log_2 \Lnull| \to 0$
while $|\log_2 \Lpos| \to \log_2(1/r)$, which is exactly one bit at $r = 0.5$
regardless of $n$ or $p$.

\begin{table}[!htbp]
\centering
\begin{threeparttable}
\caption{\textbf{Two regimes, one hypothesis pair.} Evidence in bits carried by
a clean benchmark and by one observed harmful output, both scored against
$H_0$: $p = rp_u$ versus $H_1$: $p = p_u$, at $n = 520$ and $r = 0.5$. The
ordering reverses at $p^{\times} = 1.33 \times 10^{-3}$
(Corollary~\ref{cor:crossing}). Neither observation is universally the stronger.}
\label{tab:limit}
\small
\begin{tabular}{@{}lccccc@{}}
\toprule
$p_u$ & $10^{-2}$ & $5 \times 10^{-3}$ & $10^{-3}$ & $10^{-4}$ & $10^{-5}$ \\
\midrule
$|\log_2 \Lnull|$, clean sheet & 3.779 & 1.882 & 0.375 & 0.038 & 0.004 \\
$|\log_2 \Lpos|$, one observed harm & 2.772 & 0.879 & 0.625 & 0.963 & 0.996 \\
ratio, harm to clean sheet & 0.73 & 0.47 & 1.7 & 25.7 & 265.6 \\
\bottomrule
\end{tabular}
\end{threeparttable}
\end{table}

The crossing rate falls as $1/n$, so a larger benchmark not only gains power but
also extends the regime in which absence is worth citing further into the rare
tail. Further, one
observed harm is exactly zero evidence, $\Lpos = 1$, at $p = 2p^{\times}$, above
which a single harm in $n$ trials argues \emph{for} the safer hypothesis,
because fewer harms were seen than $H_1$ predicts.

Separately, and independent of rate, a reproduced harmful output addresses a
question no null result can. Write $H_1'$ for capable and $H_0'$ for incapable
of producing the output under any elicitation procedure. A reproduced finding
from a procedure with false-positive probability $\varepsilon$ gives
$\Lambda' = 1/\varepsilon$, while a null result gives $\Lambda' \approx 1$
whatever $n$ is, since under a rare-event distribution absence is nearly as
probable under $H_1'$ as under $H_0'$. 

\subsection{The same bound for any elicitation procedure}\label{sec:general}

The analysis so far has been written for a passive benchmark, but nothing in it
depends on the prompts being fixed in advance. Red-teaming practice extends well beyond benchmarks and
because it makes the bound considerably more favorable to skilled elicitation.

The step is to correct an attribution the notation has so far obscured. We have
written $p_u$ and $p_s$ as though a harm rate were a property of the model (it
is not). What an evaluation observes is conditional on the model, the elicitation
procedure, the scoring rule, the threat model, and the sampling protocol
together, and changing any of the last four changes the number without changing
the model at all. Write instead
\[
  q_1 = P(\text{procedure elicits harm} \mid H_1), \qquad
  q_0 = P(\text{procedure elicits harm} \mid H_0),
\]
for the two hypothesis-conditioned elicitation rates of a given procedure. A
null result across $n$ independent replications then gives
$\Lnull = [(1-q_1)/(1-q_0)]^{n}$ exactly, and the evidence it carries is
$n \kappa$ bits, where
\[
  \kappa = \bigl| \log_2 \bigl[ (1-q_1)/(1-q_0) \bigr] \bigr|
\]
is the per-trial discrimination of the procedure. The passive benchmark is the
special case $q_1 = p_u$, $q_0 = p_s = r p_u$, and the approximation
$\kappa \approx q_1(1-r)/\ln 2$ used earlier is a first-order expansion valid in
the rare-event regime at fixed $r$. Outside that regime the exact expression and
its approximation diverge, sometimes substantially, and the exact form should be
used. The threshold $\tau$ against which these quantities are judged is fixed
decision-theoretically in Section~\ref{sec:tau}.

Interestingly enough, two consequences follow which pull in opposite directions.

The first is encouraging; discrimination depends on the \emph{gap} between the
two rates rather than on the sample count, so a procedure that separates the
hypotheses sharply buys evidence very cheaply. A campaign with $q_1 = 0.9$ and
$q_0 = 0.1$ contributes $3.17$ bits from a single replication, which is what a
passive benchmark at $q_1 = 10^{-3}$ needs roughly 4,400 prompts to match. Two
such replications reach $\tau = 0.1$ and three reach $\tau = 0.01$. The
impossibility boundary shifts accordingly, since $\pmin$ scales inversely with
$\kappa$: raising discrimination, not enlarging the corpus, is the efficient
route across it. Theorem~\ref{thm:impossibility} is therefore an argument
against undiscriminating red teaming, as opposed to red teaming as a whole.

The second is a caution, and it is the reason attack success rate is the wrong
figure of merit \cite{chouldechova2025comparison}. A procedure with a high
elicitation rate under $H_1$ carries no evidence at all if its rate under $H_0$
is equally high. At $q_1 = q_0 = 0.9$, $\kappa$ is exactly zero: the procedure
succeeds against safe and unsafe models alike, so its failure to succeed on this
occasion says nothing about which it faced. At $q_1 = 0.9$, $q_0 = 0.85$, five
replications contribute $2.92$ bits, less than one replication of the
discriminating procedure above. Reporting $q_1$ alone, which is what an attack
success rate does, cannot distinguish these cases.

\begin{table}[!htbp]
\centering
\begin{threeparttable}
\caption{\textbf{Discrimination, not attack success, determines evidential
worth.} Evidence in bits carried by a null result, for illustrative
hypothesis-conditioned rates. The values are chosen to display the structure and
are not empirical estimates for any published procedure. Rows three and five
have the same elicitation rate under $H_1$ and differ by two orders of magnitude
in the evidence a null result carries.}
\label{tab:procedures}
\small
\begin{tabularx}{\textwidth}{@{}X l c c c c@{}}
\toprule
\textbf{Procedure} & \textbf{Trial unit} & $n$ & $q_1$ & $q_0$ & \textbf{bits} \\
\midrule
Passive benchmark, AdvBench-sized & prompt & 520 & $10^{-3}$ & $5\times10^{-4}$ & 0.38 \\
Passive benchmark, at $\Nmax$ & prompt & $10^{5}$ & $10^{-3}$ & $5\times10^{-4}$ & 72.19 \\
Adaptive campaign, discriminating & campaign & 1 & 0.90 & 0.10 & 3.17 \\
Adaptive campaign, discriminating & campaign & 5 & 0.90 & 0.10 & 15.85 \\
Adaptive campaign, non-discriminating & campaign & 5 & 0.90 & 0.85 & 2.92 \\
\bottomrule
\end{tabularx}
\end{threeparttable}
\end{table}

The trial unit carries the whole argument and must be stated explicitly, because
the three candidates are not interchangeable. For a passive benchmark the unit
is the prompt, and independence across prompts is the assumption the clustering
analysis of Section~\ref{sec:practice} interrogates. For an adaptive campaign
the unit is the \emph{campaign}, not the prompts inside it, so an expert who issues
a hundred probes while conditioning each on the last has run one trial, not a
hundred, since the probes are neither independent nor identically distributed.
Independent replication means running the procedure again from the start, with a
different operator or seed and no shared state. Counting adaptive probes as
independent trials inflates $n$ by whatever factor the campaign length happens
to be, and it is the single most likely way for this framework to be misapplied
in favor of the evaluator.

Estimating $q_1$ requires a positive control, meaning a reference model known to
possess the capability, against which the procedure's elicitation rate can be
measured directly. Laboratories routinely construct such references. Almost none
report the resulting rate, and without it a null result cannot be weighed at
all. If a model behaves differently when it infers it is under evaluation
\cite{vanderweij2025sandbagging}, then $q_1$ measured on the reference overstates
the procedure's discrimination against the actual target, and the null result is
worth less than the formula returns. That is the sharpest form of the
sandbagging concern for our framework.

\subsection{Where the threshold \texorpdfstring{$\tau$}{tau} comes from}\label{sec:tau}

The threshold is not arbitrarily chosen by hand. Let $L = C_{\mathrm{FN}}/C_{\mathrm{FP}}$
be the loss ratio, the cost of deploying a genuinely unsafe model relative to
that of an unnecessary delay. The Bayes-optimal rule deploys when
$P(H_1 \mid \text{data})\,C_{\mathrm{FN}} \le P(H_0 \mid \text{data})\,C_{\mathrm{FP}}$,
which rearranges to $O_1 \le 1/L$, and since $O_1 = O_0 \Lnull$,
\[
  \tau = \frac{1}{L\,O_0},
\]
fixed by the acceptable posterior risk and the prior odds, and it is this
quantity that Table~\ref{tab:ladder} holds fixed when converting a risk
tolerance into a sample size. The direction is the
one intuition demands: the more catastrophic a missed failure, the larger $L$,
the smaller $\tau$, and the more evidence a null result must carry.

The consequences cut both ways, which is the point of stating the rule
explicitly. A regulator treating an unsafe deployment as one hundred times
costlier than a delay, from even prior odds, needs $\tau = 10^{-2}$, demanding
$n \ge 915$ at $p = 10^{-2}$ but $n \ge 9{,}204$ at $p = 10^{-3}$ and
$n \ge 92{,}097$ at $p = 10^{-4}$. The first is comfortably within reach of
existing practice; the third is not. Fixing $\tau$ decision-theoretically
converts any stated risk tolerance directly into a sample-size floor, but
whether current benchmarks clear it depends entirely on the harm rate.

% ==========================================================================
\section{How far is current practice from the boundary?}\label{sec:practice}

Theorems~\ref{thm:adequacy} and~\ref{thm:impossibility} locate a boundary. This
section measures where eight widely used evaluation suites sit relative to it,
and finds them adequate on one side and far short on the other.

One thing to note is that the only benchmark-specific input to the calculations below
is the sample size $n$; harm rates are assumed, common across suites, and taken
from published estimates. These are therefore analytic properties of a design at
an assumed operating point and they are not a ranking of benchmark quality. The suites also
differ in estimand, and we treat that as part of the finding. XSTest \cite{rottger2024xstest} is an exaggerated-safety suite whose 250
safe prompts measure over-refusal rather than harm elicitation. SafetyBench
\cite{zhang2024safetybench} is a multiple-choice knowledge probe; the
$n = 2{,}100$ figure is its Chinese subset (\texttt{test\_zh\_subset.json}),
built by removing items with highly sensitive keywords and downsampling to 300
questions in each of seven categories to accommodate commercial API filtering
and rate limits. A single power axis is comparable across these suites only in
the weak sense that all are invoked to support the same kind of deployment
claim.

\subsection{Statistical power}

At the frontier model operating point ($p_{\text{unsafe}} = 0.01$, fifty percent
improvement), power ranges from 14.9 percent (XSTest, $n = 250$) to 59.3 percent
(SafetyBench, $n = 2{,}100$), with the two most widely cited adversarial
benchmarks, AdvBench ($n = 520$) \cite{zou2023universal} and HarmBench
($n = 400$) \cite{mazeika2024harmbench}, at 22.6 and 18.7 percent
(Figure~\ref{fig:power}). Achieving 80 percent power at this point requires
approximately 3,679 prompts per arm at one-sided $\alpha = 0.05$, or 4,670 per
arm two-sided.

It is important to consider that power and certification are different standards. Power concerns detecting a difference between two
models; certification concerns bounding one model's harm rate. A benchmark can
be underpowered for the first and adequate for the second, and at
$p = 10^{-2}$ several of these suites are exactly that. The comparison task is
the harder of the two, and reporting only the power figure understates what a
clean run establishes.

\begin{figure}[!htbp]
\centering
\includegraphics[width=\textwidth]{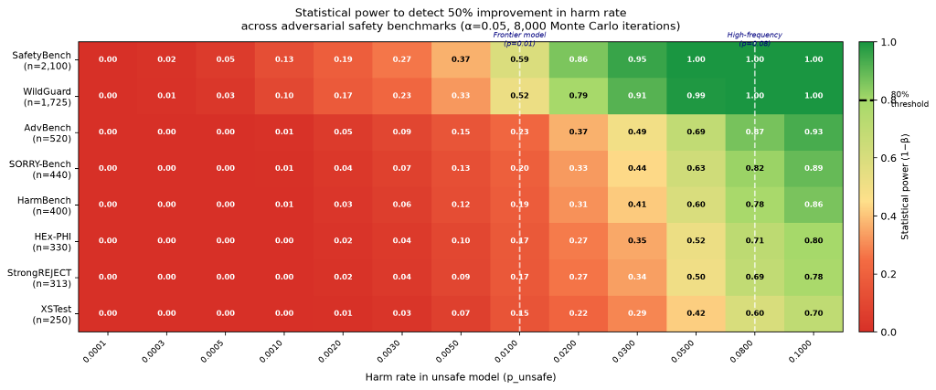}
\caption{\textbf{Statistical power across eight evaluation suites.} Power to
detect a fifty percent reduction in harm rate, from 8,000 Monte Carlo iterations
per operating point at $\alpha = 0.05$. Dashed lines mark the frontier operating
point ($p = 0.01$) and a high-frequency category ($p = 0.08$). At $p = 0.08$
half of the suites are adequately powered; at $p = 0.01$ none reaches 80 percent.}
\label{fig:power}
\end{figure}

\subsection{False certification}

When an evaluation observes zero harmful outputs, the true rate may still be
substantial. At $p = 0.001$, StrongREJECT ($n = 313$)
\cite{souly2024strongreject} yields a false-certification probability of 73.1
percent and AdvBench 59.4 percent. Reducing it below five percent at that rate
requires $n > 2{,}995$, and below one percent $n > 4{,}603$; the largest suite
reviewed here, at $n = 2{,}100$, reaches neither. At $p = 10^{-2}$, by contrast,
the same five percent standard needs only 299 prompts, which seven of the eight
suites clear.

What a clean sheet does establish is an upper bound, which is informative even when certification fails. The exact
Clopper-Pearson interval \cite{clopper1934use} is the formal instrument, and its
zero-event case is Hanley and Lippman-Hand's rule of three
\cite{hanley1983nothing}. Zero harms in 520 prompts gives a 95 percent
one-sided upper bound of 0.57 percent, roughly one in 174. Zero in 2,100 gives
0.14 percent, one in 701. Reporting such calibrated bounds costs a laboratory
nothing it does not already possess.

Table~\ref{tab:falsecert} shows the whole surface, since the boundary is a
statement about a region of it.

\begin{table}[!htbp]
\centering
\begin{threeparttable}
\caption{\textbf{False-certification probability across the operating surface.}
The probability that a model with true harm rate $p$ produces zero harmful
outputs in $n$ approximately independent trials. Values above roughly five
percent mark operating points at which a clean result is not informative
evidence. Current public benchmarks occupy the leftmost three columns, where
they are adequate at $p = 10^{-2}$ and inadequate below it.}
\label{tab:falsecert}
\small
\begin{tabular}{lcccccc}
\toprule
$P(k=0 \mid n, p)$, \% & $n = 250$ & $n = 520$ & $n = 2{,}100$ & $n = 5{,}000$ & $n = 10^{4}$ & $n = 10^{5}$ \\
\midrule
$p = 10^{-2}$ &  8.1 &  0.5 &  0.0 &  0.0 &  0.0 &  0.0 \\
$p = 10^{-3}$ & 77.9 & 59.4 & 12.2 &  0.7 &  0.0 &  0.0 \\
$p = 10^{-4}$ & 97.5 & 94.9 & 81.1 & 60.7 & 36.8 &  0.0 \\
$p = 10^{-5}$ & 99.8 & 99.5 & 97.9 & 95.1 & 90.5 & 36.8 \\
\bottomrule
\end{tabular}
\end{threeparttable}
\end{table}

\begin{figure}[!htbp]
\centering
\includegraphics[width=\textwidth]{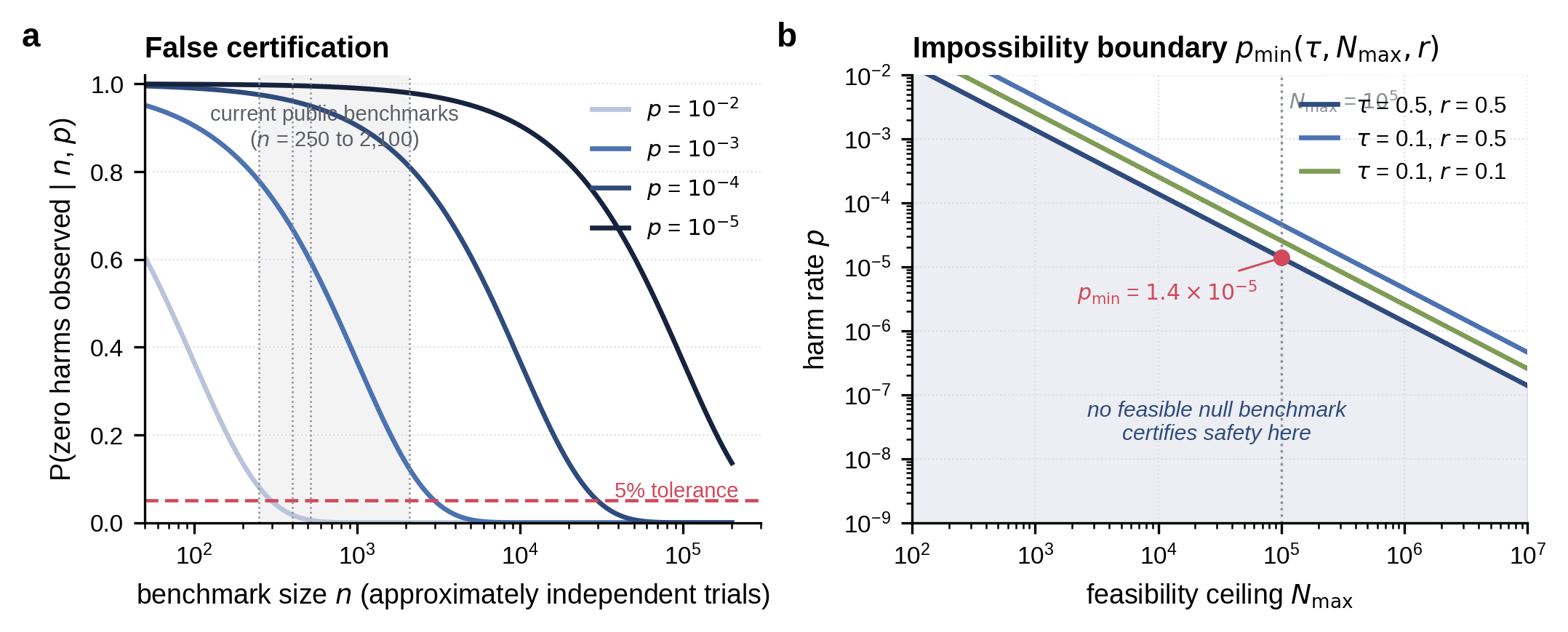}
\caption{\textbf{False certification and the boundary.} (\textbf{a})~False
certification against benchmark size for four harm rates, with the range of
current public benchmarks shaded. (\textbf{b})~The boundary
$\pmin(\tau, \Nmax, r)$ against the feasibility ceiling, for three combinations
of $\tau$ and $r$. Above a curve, Theorem~\ref{thm:adequacy} gives a finite
sufficient $n$; in the shaded region below, Theorem~\ref{thm:impossibility}
applies and no feasible benchmark certifies.}
\label{fig:boundary}
\end{figure}

\subsection{Clustering and effective sample size}

The calculations above treat the $n$ prompts as independent draws, but real
benchmarks are not built that way. Prompts are generated from a modest number of
templates and paraphrase families, so responses within a family are positively
correlated and the nominal count overstates the information carried.

The standard correction is the design effect,
$\mathrm{DEFF} = 1 + (m-1)\rho$, with $m$ the mean cluster size and $\rho$ the
intra-cluster correlation, giving effective sample size
$\neff = n/\mathrm{DEFF}$ \cite{kish1965survey}. Modest clustering bites hard:
$m = 10$ with $\rho = 0.1$ gives $\mathrm{DEFF} = 1.9$, and $m = 25$ with
$\rho = 0.1$ gives $3.4$, so $\neff \approx 0.29n$. The design effect is a
variance approximation, so substituting $\neff$ into a binomial likelihood is a
first-order heuristic rather than an identity; the principled route models the
per-prompt indicator as beta-binomial with intra-class correlation $\rho$
\cite{williams1975analysis}. Both routes agree on the direction, and the
direction is what matters here: clustering strictly raises $P(k=0)$ for any
$\rho > 0$, so it pushes the boundary of Theorem~\ref{thm:adequacy} upward and
enlarges the region where Theorem~\ref{thm:impossibility} applies.

A published disclosure illustrates why this is not merely a theoretical concern.
Anthropic's Claude 2 model card reports that among a held-out set of exactly 328
prompts, the model produced a response judged more harmful than a fixed refusal
reference in exactly four cases \cite{anthropic2023claude2}. Read at the prompt
level, $k = 4$ of $n = 328$, the prompts are the independent units and the exact
95 percent interval on the per-prompt rate is 0.33 to 3.09 percent, which is a
genuinely informative statement. But the same passage records that five
responses were sampled per prompt at $T = 1$, so scoring was performed over
1,640 responses while the numerator counts prompts. The per-response rate, which
is the quantity a deployment claim concerns since users receive responses rather
than prompts, is not recoverable: its numerator is not reported, and the card
notes that in one flagged case the model was disrupted in about half of its
sampled responses, which is intra-cluster correlation described in words. Two binomials exist here, they estimate different quantities, and the
disclosure supports the one further from the deployment claim. Reporting the
analysis unit and either $\neff$ or $\rho$ would resolve this at no cost, and
almost no disclosure currently does.

\subsection{Distributional validity}

Adequate power would not rescue an evaluation whose prompt distribution does not
represent deployment. Using TF-IDF vectorization and neural sentence embeddings
(all-MiniLM-L6-v2 \cite{reimers2019sbert}), we compared AdvBench, HarmBench, and
9,089 real user queries from LMSYS-Chat-1M \cite{zheng2024lmsys}. Benchmark
prompts are internally self-similar at 0.18 (TF-IDF) and 0.19 (neural), while
benchmark-to-deployment similarity is 0.057 and 0.055, a 3.1 to 3.5-fold excess
consistent across representations (Figure~\ref{fig:umap}). Sliced Wasserstein
distance and maximum mean discrepancy \cite{gretton2012mmd} confirm the
separation. Cross-benchmark distances (0.014 to 0.022) are comparable to or
exceed benchmark-to-deployment distances (0.014 to 0.020): the two primary
public benchmarks do not triangulate a deployment distribution, but sample from
a narrow, partially redundant region of adversarial prompt space.

\begin{figure}[htbp]
\centering
\includegraphics[width=0.72\textwidth]{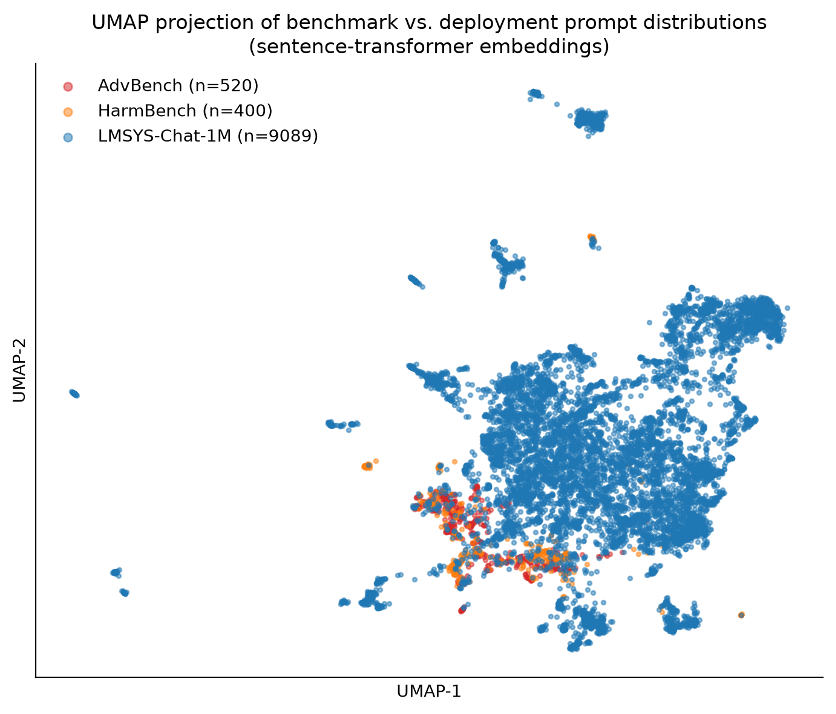}
\caption{\textbf{Benchmark and deployment prompt distributions.} Joint UMAP
projection \cite{mcinnes2018umap} of AdvBench, HarmBench, and a 9,089-query
sample of LMSYS-Chat-1M under sentence-transformer embeddings. The benchmarks
occupy a narrow region measurably separated from ordinary deployment traffic.
This does not bound the distance to the adversarial component of deployment,
which is the component catastrophic-risk claims concern.}
\label{fig:umap}
\end{figure}

The qualitative consequence needs no metric at all. Let $Q$ be the test
distribution and $P$ the deployment distribution, with $h$ indicating harm. A
zero-harm result on $n$ draws from $Q$ bounds $\mathbb{E}_Q[h]$ and, absent an
assumption linking $P$ to $Q$, places no upper bound on $\mathbb{E}_P[h]$
whatsoever: take $h$ supported on a region $A$ with $Q(A)$ arbitrarily small and
$P(A)$ arbitrarily close to one. A benchmark null result therefore says nothing
about deployment without a linking assumption, and the measured separation is
evidence that the assumption is not innocuous.

Worth noting here that LMSYS-Chat-1M samples ordinary use, which is one
component of a deployment mixture that also contains misuse-like requests,
deliberate attacks, and their multilingual, encoded, and role-play variants.
The measured separation establishes that these benchmarks do not represent
ordinary traffic, and nothing more. It does not bound the distance to the
adversarial component, and we make no such claim: an adversarial benchmark could
sit far from ordinary use and close to real attack traffic, which is what its
designers intend. Answering that question requires a corpus of real adversarial
traffic that is not, to our knowledge, publicly available at scale. We regard
constructing one as a higher priority for the field than enlarging the existing
benchmarks.

\subsection{Structural generalization}

A model that refuses a specific syntactic pattern has learned a surface
association, not the underlying harm concept, and adversaries circumvent such
associations trivially \cite{carlini2024aligned}. We term this the distinction
between \emph{behavioral safety}, refusal of specific formulations, and
\emph{structural safety}, robustness of harm avoidance as a latent property.
Published attacks using past-tense, multilingual, encoded, and role-play
variants \cite{liu2024autodan,andriushchenko2025pasttense,yong2024lowresource,
mehrotra2024tap}, together with the divergence between in-distribution and
out-of-distribution attack success once benchmarks saturate, indicate that the
two come apart \cite{broomfield2025structural,perez2022ignore}. Benchmark
performance is evidence of structural safety only to the degree that it
generalizes across semantically equivalent variants and novel attack families,
and the semantic consistency across variants together with the gap
between known and novel attack success rates should be reported alongside
canonical performance.

% ==========================================================================
\section{What an evaluation may claim}\label{sec:claims}

The framework's practical output is a mapping from what an evaluation did to
what it entitles a laboratory to say. 

\begin{table}[!htbp]
\centering
\begin{threeparttable}
\caption{\textbf{The claims ladder.} Required sample size, at $r = 0.5$ and
one-sided $\alpha = 0.05$, for each level of claim at four harm rates. Current
public benchmarks ($n \le 2{,}100$) support every claim in the table at
$p = 10^{-2}$, the weakest two at $p = 10^{-3}$, and only the upper bound below
that. The final row applies Theorem~\ref{thm:impossibility} at
$\Nmax = 10^{5}$.}
\label{tab:ladder}
\small
\begin{tabularx}{\textwidth}{@{}X cccc@{}}
\toprule
\textbf{Claim the evaluation licenses} & $p = 10^{-2}$ & $p = 10^{-3}$ &
$p = 10^{-4}$ & $p = 10^{-5}$ \\
\midrule
Harm rate bounded above at 95\% confidence & any $n$ & any $n$ & any $n$ & any $n$ \\
Belief shifts toward safety by $2\times$ ($\tau = 0.5$) & 138 & 1{,}386 & 13{,}862 & 138{,}629 \\
False certification held below 5\% & 299 & 2{,}995 & 29{,}956 & 299{,}572 \\
Belief shifts by $10\times$ ($\tau = 0.1$) & 458 & 4{,}602 & 46{,}049 & 460{,}514 \\
Belief shifts by $100\times$ ($\tau = 0.01$) & 915 & 9{,}204 & 92{,}097 & 921{,}028 \\
\midrule
Feasible at $\Nmax = 10^{5}$? & yes & yes & yes & no \\
\bottomrule
\end{tabularx}
\end{threeparttable}
\end{table}

The ladder is the paper's answer to the question of whether safety benchmarks
work. They work, at rates and sample sizes the table specifies, and they do not
work outside them. A laboratory running AdvBench on a category with a one
percent base rate may claim an order-of-magnitude update toward safety, which is
a substantial claim and stronger than anything current system cards assert. The
same laboratory running the same benchmark on a category at $10^{-4}$ may claim
only an upper bound of one in 174, and should say so rather than reporting that
no harmful outputs were observed.

\subsection{A reporting template}

The quantities an evaluation must publish before a null result can be weighed
are few, and none requires data a laboratory does not already hold at the moment
it evaluates. 

\begin{table}[!htbp]
\centering
\begin{threeparttable}
\caption{\textbf{Minimum reporting template for a red-team null result.} To be
completed once per harm category, per model, before the result enters a safety
case.}
\label{tab:template}
\small
\begin{tabularx}{\textwidth}{@{}l l X@{}}
\toprule
\textbf{Field} & \textbf{Symbol} & \textbf{Requirement} \\
\midrule
Harm category & --- & Named, with the threat model it operationalizes \\
Elicitation procedure & --- & Passive corpus, adaptive campaign, or other; stated \\
Trial unit & --- & Prompt, campaign, or replication; campaigns are not prompts \\
Nominal sample size & $n$ & Exact count of independent trials in the stated unit \\
Event count & $k$ & Exact integer, at the same analysis unit as $n$ \\
Generations per prompt & $m$ & Exact; state if greater than one \\
Intra-cluster correlation & $\rho$ & Estimated from the run, or an upper bound \\
Effective sample size & $\neff$ & $n / [1 + (m-1)\rho]$ \\
Upper confidence bound & --- & Exact Clopper-Pearson at 95\% on the nominal $(n, k)$ \\
Clustering-adjusted bound & --- & Beta-binomial or GEE; \emph{not} Clopper-Pearson on $\neff$ \\
Minimum detectable effect & $\delta_{\min}$ & Prespecified, not chosen post hoc \\
Observed power & $1 - \beta$ & Against $\delta_{\min}$, with sidedness stated \\
Evidentiary threshold & $\tau$ & With the loss ratio $L$ and prior odds $O_0$ that fix it \\
Elicitation rate under $H_1$ & $q_1$ & Measured against a positive control; state the control \\
Elicitation rate under $H_0$ & $q_0$ & Stated or bounded; $q_1$ alone is not sufficient \\
Per-trial discrimination & $\kappa$ & $|\log_2[(1-q_1)/(1-q_0)]|$, in bits \\
Achieved likelihood ratio & $\Lnull$ & $[(1-q_1)/(1-q_0)]^{n}$ exactly, reported as a number \\
Claim licensed & --- & The highest row of Table~\ref{tab:ladder} the design supports \\
Distributional coverage & --- & Distance to each deployment mixture component \\
Semantic consistency & $\sigma$ & Across prompt transformation types \\
Out-of-distribution gap & --- & Attack success on held-out attack families \\
Position relative to boundary & --- & Whether $p$ is suspected below $\pmin(\tau, \Nmax, r)$ \\
\bottomrule
\end{tabularx}
\end{threeparttable}
\end{table}

Two rows deserve comment here. The Clopper-Pearson interval is defined for an integer
number of independent Bernoulli trials, so it must be computed on the nominal
$(n,k)$; $\neff$ is generally fractional and is not a sample size in the sense
the interval requires. Where clustering is material, report the exact interval
and a beta-binomial or generalized-estimating-equations interval
that models the dependence directly. The final row has no counterpart in current
practice: for a category suspected to lie below $\pmin$, the honest entry is not
a larger benchmark but a declaration that null results cannot certify this
category at feasible scale, together with the non-benchmark evidence being
relied on instead.

Filling the template is largely a matter of reporting rather than of new
measurement, and a preliminary review of public frontier disclosures suggests
the gap is wide. Across nine such documents \cite{openai2023gpt4,
openai2024gpt4o, anthropic2023claude2, anthropic2024claude3, anthropic2025claude4,
gemini2023, gemini15, touvron2023llama2, grattafiori2024llama3} we found that
only one endpoint reports both an exact numerator and an exact denominator for a
binary harm rate, and none reports the dependence structure needed to interpret
it. We develop
that audit, and the transparency instrument behind it, separately.

\subsection{Four reforms}

Pre-registration of the harm categories, target detectable rates, sampling
strategy, and decision criteria, before deployment decisions are made, prevents
post-hoc reframing and establishes a public record. Calibrated reporting means
publishing Table~\ref{tab:template} per category and stating the licensed claim
rather than the raw observation. Held-out corpora, maintained by independent
bodies under controlled access, address benchmark saturation, which is a
structural consequence of public evaluation sets. Independent evaluation
addresses the conflict of interest that no methodological reform resolves, and
its distinctive role would be to assess evaluation design validity, meaning
whether power, coverage, and generalization testing suffice for the claim being
made.

Several institutions have begun building this infrastructure, including METR's
pre-deployment capability evaluations, the UK AI Security Institute's model
evaluations programme, and Anthropic's Responsible Scaling Policy commitments
\cite{anthropic2024rsp}. Mapping these against the three conditions reveals a
consistent pattern: they address structural generalization more directly than
statistical sufficiency or distributional validity. That is not a criticism of
intent. It reflects the absence of field-level standards of the kind this
framework supplies.

% ==========================================================================
\section{Limitations}\label{sec:limitations}

We state limitations in one place rather than distributing them, because their
aggregate is the honest measure of what this paper establishes.

\paragraph{The mathematics is elementary.} The zero-numerator bound underlying
both theorems dates to 1983 \cite{hanley1983nothing}. We claim no mathematical
novelty for this. Our contribution is the consequence drawn for evaluation design and
the identification of the boundary in both directions.

\paragraph{The power figures dependence on assumed operating points:} Published
harm-rate estimates for frontier models are sparse, and our anchors are
defensible rather than authoritative. The qualitative conclusions, adequacy at
$10^{-2}$ and inadequacy at $10^{-4}$, are robust across the plausible range;
the specific percentages are not.

\paragraph{Suites being heterogeneous in estimand:} XSTest measures
over-refusal and SafetyBench is multiple-choice. Placing eight suites on one
power axis is a deliberate simplification, defensible only because all eight are
invoked for the same kind of deployment claim.

\paragraph{The distributional gap covers one mixture component:} As stated previously, LMSYS-Chat-1M
samples ordinary use, so the measured separation establishes only that these
benchmarks do not represent ordinary traffic. It does not bound the distance to
the adversarial component.

\paragraph{The generalization depends on two estimated inputs.}
Section~\ref{sec:general} replaces a sample count with a pair of
hypothesis-conditioned elicitation rates, and both must be estimated rather than
assumed. The rate under $H_1$ requires a positive control whose validity as a
proxy is itself an assumption; the rate under $H_0$ is harder still, since it
asks how often the procedure fires against a model that is in fact acceptable.
Where $q_0$ cannot be estimated, an upper bound on it yields a lower bound on
discrimination, which is the conservative direction. The illustrative values in
Table~\ref{tab:procedures} display the structure of the result and are not
empirical estimates of any published procedure.

\paragraph{Both theorems have a stated falsifier.} Their bite depends on
$\Nmax$ binding on approximately independent trials. A demonstration that
adaptive elicitation achieves better than $\mathcal{O}(1/p)$ scaling in the
rare-harm regime would move the boundary. The bottleneck argument of
Section~\ref{sec:ceiling} suggests the natural route tightens rather than
loosens it, but that argument is not a proof of tightness by itself.

% ==========================================================================
\section{Discussion}

We defined the ceiling abstractly and
instantiated it only for the passive benchmark null result. The same construct
should be computable for other evidence types safety cases cite without
comparable statistical treatment: interpretability findings, human uplift
trials, whose statistical structure likely changes the closed form rather than
merely its parameters, and structured expert elicitation. Each requires its own
model of how the evidence is generated. This would be the natural next step.

Building on an adjacent line of work, if each cited evidence source carried
a computable ceiling, a safety case asserting posterior confidence beyond the
aggregate ceiling of its evidence would be unsound in a precise sense. We are
not confident this survives contact with how real safety cases combine
qualitative and quantitative arguments, and we have not formalized how ceilings
from heterogeneous types should aggregate. We state it as an open research subject.

% ==========================================================================
\section{Conclusions}

Adversarial evaluation has identified genuine vulnerabilities
\cite{ganguli2022redteaming, perez2022redteaming}, driven genuine safety
improvements, and provided the only systematic check on the deployment of
manifestly dangerous systems. Safety benchmarks are not worthless, and the
emerging view that they are is an overcorrection. What they are is instruments with a computable resolution.
Above a calculable harm rate, a benchmark of modest and affordable size shifts
belief toward safety by a factor a laboratory can state in advance, and a clean
result is then the stronger of the two possible observations. Below that rate,
no benchmark of feasible size does so, and the field should stop trying and
carry the certification burden with other evidence.

The boundary between the two is not a matter of judgment or of methodological
taste. It follows from one closed-form expression, it can be computed before an
evaluation is run, and it tells a laboratory what its evaluation will be able to
claim while there is still time to change the design, subject to the
qualifications set out in Section~\ref{sec:limitations}. That is the discipline
this paper proposes: not that safety evaluations be abandoned, nor that they be
trusted as they stand, but that every result be reported alongside the claim it
supports.

% ==========================================================================
\section*{Methods}
\addcontentsline{toc}{section}{Methods}

\paragraph{Monte Carlo power simulation.} Power was estimated via 8,000
parametric Monte Carlo iterations per $(n, p_{\text{unsafe}}, p_{\text{safe}})$
triple, drawing harm counts from $\mathrm{Binomial}(n,p)$ and applying a
one-sided two-sample proportions $z$-test at $\alpha = 0.05$. Rates were
anchored to published estimates: high-frequency ($p = 0.08$
\cite{ganguli2022redteaming}), mid-frequency ($p = 0.025$, HarmBench frontier
estimates \cite{mazeika2024harmbench}), low-frequency ($p = 0.003$), and the
frontier scenario ($p_{\text{unsafe}} = 0.01$).

\paragraph{Analytic quantities.}
The false-certification probability is $P(k=0 \mid n,p) = (1-p)^{n}$, and the
sample size holding it below a tolerance $\phi$ is
\[
  n \;=\; \left\lceil \frac{\ln \phi}{\ln(1-p)} \right\rceil .
\]
Sample sizes for a likelihood-ratio threshold $\tau$ follow~\eqref{eq:nreq}
rather than the expression above,
\[
  n \;=\; \left\lceil
    \frac{\ln \tau}{\ln\!\bigl[(1-p_{u})/(1-p_{s})\bigr]}
  \right\rceil ,
  \qquad p_{s} = r p_{u},
\]
the two criteria differing by a factor of $(1-r)^{-1}$ to first order, which is
$2$ at $r = 0.5$. Clopper--Pearson bounds were computed from the Beta
distribution; for zero events the one-sided $95$ percent upper bound reduces to
$1 - \alpha^{1/n}$ and the two-sided to $1 - (\alpha/2)^{1/n}$. Both are
reported where they differ materially. 

\paragraph{Distributional analysis.} AdvBench ($n = 520$), HarmBench
($n = 400$), and 9,089 deduplicated first-turn user queries sampled at random
from LMSYS-Chat-1M were vectorized two ways: TF-IDF (bigrams, 20,000 features)
reduced to 100 dimensions via truncated SVD, and neural sentence embeddings
(all-MiniLM-L6-v2, 384 dimensions). Sliced Wasserstein distance used 500 random
projections; MMD used an RBF kernel with median heuristic bandwidth. The joint
UMAP projection used cosine metric, \texttt{n\_neighbors} = 15,
\texttt{min\_dist} = 0.1.

\paragraph{Software.} Python 3.12 (numpy 2.4.4, scipy 1.17.1, scikit-learn
1.8.0, statsmodels 0.14.6). Code at
\url{https://github.com/hackwither/ai-redteam-evidential-limits}. All datasets
used are publicly available.

% ==========================================================================
\section*{Data availability}
\addcontentsline{toc}{section}{Data availability}

AdvBench: \url{https://github.com/llm-attacks/llm-attacks}.
HarmBench: \url{https://github.com/centerforaisafety/HarmBench}.
LMSYS-Chat-1M (gated):
\url{https://huggingface.co/datasets/lmsys/lmsys-chat-1m}.
Code: \url{https://github.com/hackwither/ai-redteam-evidential-limits}.
No proprietary data was used.

% ==========================================================================
\bibliographystyle{unsrt}
\bibliography{refs}

\end{document}